\newtheorem{corollary}{Corollary}
\newtheorem{theorem}{Theorem}
\newtheorem{proposition}{Proposition}
\theoremstyle{definition}
\newtheorem{definition}{Definition}
\theoremstyle{remark}
\newtheorem{remark}{Remark}
\theoremstyle{lemma}
\newtheorem{lemma}{Lemma}
\DeclareMathOperator*{\argmin}{arg\,min}
\newcommand{\A}{\mathcal{A}}
\newcommand{\D}{\mathcal{D}}
\newcommand{\N}{\mathbb{N}}
\newcommand{\C}{\mathcal{C}}
\newcommand{\X}{\mathcal{X}}
\newcommand{\Y}{\mathcal{Y}}
\newcommand{\E}{\mathbb{E}}
\renewcommand{\O}{\mathcal{O}}
\newcommand{\PP}[1]{\mathbb{P}\!\left\{{#1}\right\}}
\newcommand{\one}{\mathbf{1}}
\title{Private Prediction Sets}
\author{Anastasios N. Angelopoulos\thanks{equal contribution},~ Stephen Bates\footnotemark[1],~ Tijana Zrnic\footnotemark[1],~ Michael I.~Jordan\\ \\ University of California, Berkeley}
\begin{document}

\maketitle

\begin{abstract}
    In real-world settings involving consequential decision-making, the deployment of machine learning systems generally requires both reliable uncertainty quantification and protection of individuals' privacy.  We present a framework that treats these two desiderata jointly.  Our framework is based on conformal prediction, a methodology that augments predictive models to return \emph{prediction sets} that provide uncertainty quantification---they provably cover the true response with a user-specified probability, such as 90\%.  One might hope that when used with privately-trained models, conformal prediction would yield privacy guarantees for the resulting prediction sets; unfortunately this is \emph{not} the case.  To remedy this key problem, we develop a method that takes any pre-trained predictive model and outputs differentially private prediction sets. Our method follows the general approach of split conformal prediction; we use holdout data to calibrate the size of the prediction sets but preserve privacy by using a privatized quantile subroutine. 
This subroutine compensates for the noise introduced to preserve privacy in order to guarantee correct coverage.
We evaluate the method on large-scale computer vision datasets.
\end{abstract}

\section{Introduction}
The impressive predictive accuracies of black-box machine learning algorithms on tightly-controlled test beds do not sanctify their use in consequential applications.
For example, given the gravity of medical decision-making, automated diagnostic predictions must come with rigorous instance-wise uncertainty to avoid silent, high-consequence failures.
Furthermore, medical data science requires privacy guarantees, since individuals would suffer material harm were their data to be accessed or reconstructed by a nefarious actor.  While uncertainty quantification and privacy are generally dealt with in isolation, they arise together in many real-world predictive systems, and, as we discuss, they interact. Accordingly, the work that we present here involves a framework that addresses uncertainty and privacy jointly.
Specifically, we develop a differentially private version of conformal prediction that results in private, rigorous, finite-sample uncertainty quantification for any model and any dataset at little computational cost.

Our approach builds on the notion of \emph{prediction sets}---subsets of the response space that provably cover the true response variable with pre-specified probability (e.g., $90\%$). 
Formally, for a test point with feature vector $X \in \X$ and response $Y \in \Y$, we compute an uncertainty set function, $\C(\cdot)$, mapping a feature vector to a subset of $\Y$ such that
\begin{equation}
\label{eq:predictive_set_coverage}
\PP{Y \in \C(X)} \ge 1 - \alpha,
\end{equation}
for a user-specified confidence level $1-\alpha\in(0,1)$.
We use the output of an underlying predictive model (e.g., a pre-trained, privatized neural network) along with a held-out \emph{calibration dataset}, $\{(X_i,Y_i)\}_{i=1}^n$, from the same distribution as $(X,Y)$ to fit the set-valued function $\C(\cdot)$. 
The probability in expression~\eqref{eq:predictive_set_coverage} is therefore taken over both the randomness in $(X,Y)$ and $\{(X_i,Y_i)\}_{i=1}^n$.
If the underlying model expresses uncertainty, $\C$ will be large, signaling skepticism regarding the model's prediction.

Moreover, we introduce a \emph{differentially private} mechanism for fitting $\C$, such that the sets that we compute have low sensitivity to the removal of any calibration point. This will allow an individual to contribute a calibration data point without fear that the prediction sets will reveal their sensitive information.
Note that \emph{even if the underlying model is trained in a privacy-preserving fashion, this provides no privacy guarantee for the calibration data}. 
Therefore, we will provide an adjustment that masks the calibration dataset with additional randomness, addressing both privacy and uncertainty simultaneously.

\begin{figure*}[t]
    \centering
    \includegraphics[width=\textwidth]{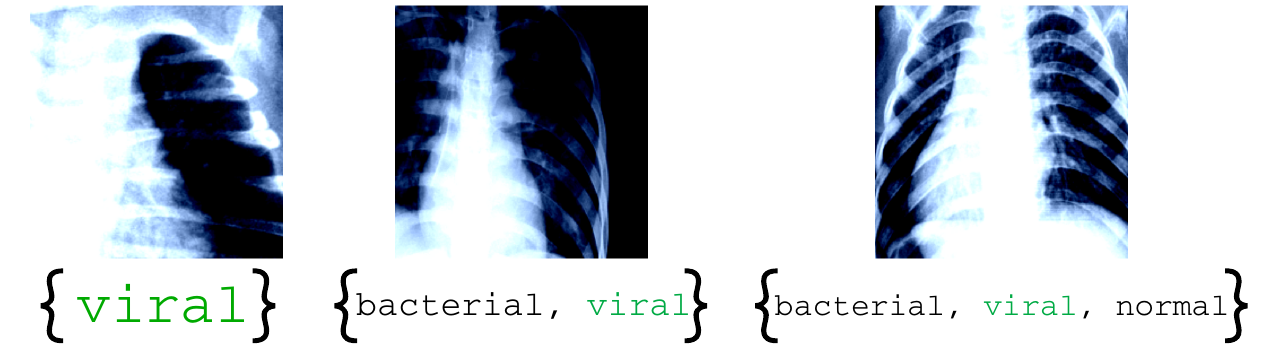}
    \vspace{-0.5cm}
    \caption{\textbf{Examples of private conformal prediction sets on COVID-19 data.}
    We show three examples of lung X-rays taken from the CoronaHack dataset~\cite{perez2020databiology} with their corresponding private prediction sets at $\alpha=10\%$ from a ResNet-18.
    All three patients had \texttt{viral pneumonia} (likely COVID-19).
    The classes in the prediction sets appear in ranked order according to the softmax score of the model; the center and right images are incorrectly classified if the predictor returns only the most likely class, but are correctly covered by the private prediction sets.
    See Experiment~\ref{sec:experiment4} for details.
    }
    \label{fig:three_covid}
\end{figure*}

See Figure~\ref{fig:three_covid} for a concrete example of private prediction sets applied to the automated diagnosis of COVID-19.
In this setting, the prediction sets represent a set of plausible diagnoses based on an X-ray image---either \texttt{viral pneumonia} (presumed COVID-19), \texttt{bacterial pneumonia}, or \texttt{normal}.
We guarantee that the true diagnosis is contained in the prediction set with high probability, while simultaneously ensuring that an adversary 
cannot detect the presence of any one of the X-ray images used to train the predictive system.

\subsection{Our contribution}

Our main contribution is a privacy-preserving algorithm which takes as input any predictive model together with a calibration dataset, and outputs a set-valued function $\C(\cdot)$ that maps any input feature vector $X$ to a set of labels such that the true label $Y$ is contained in the predicted set with probability at least $1-\alpha$, as per equation~\eqref{eq:predictive_set_coverage}. In order to generate prediction sets satisfying this property, we will use ideas from  split conformal prediction~\citep{papadopoulos2002inductive, vovk2005algorithmic, lei2018distribution}, modifying this approach to ensure privacy.
Importantly, if the provided predictive model is also trained in a differentially private way, then the whole pipeline that maps data to a prediction set function $\C(\cdot)$ is differentially private as well.

In Algorithm~\ref{alg:main_algo}, we sketch our main procedure.

\begin{algorithm}[H]
\SetAlgoLined
\textbf{input:} predictor $\hat{f}(\cdot)$, calibration data $\{(X_i,Y_i)\}_{i=1}^n$, privacy level $\epsilon>0$, confidence level $\alpha\in(0,1)$\newline
For $1\leq i\leq n$, compute conformity score $s_i = S_{\hat f}(X_i,Y_i)$\newline
Compute $\epsilon$-differentially private $(1-\alpha + O((n\epsilon)^{-1}))$-quantile of $\{s_i\}_{i=1}^n$, denoted $\hat s$\newline
\textbf{output:} $\C(\cdot) = \{y:S_{\hat f}(\cdot,y)\leq \hat s\}$
\caption{Private prediction sets (informal)}
\label{alg:main_algo}
\end{algorithm}

Algorithm~\ref{alg:main_algo} first computes the conformity scores for all training samples. Informally, these scores indicate how well a feature--label pair ``conforms'' to the provided model $\hat{f}$, a low score implying high conformity and a high score being indicative of an atypical point from the perspective of $\hat{f}$. Then, the algorithm generates a certain carefully chosen private quantile of the scores. Finally, it returns a prediction set function $\C(\cdot)$ which, for a given input feature vector, returns all labels that result in a conformity score below the critical threshold $\hat s$.

Our main theoretical result asserts that Algorithm~\ref{alg:main_algo} has strict coverage guarantees and is differentially private. In addition, we show that the coverage is almost \emph{tight}, that is, not much higher than $1-\alpha$.

\begin{theorem}[Informal preview]
\label{thm:informal}
	The prediction set function $\C(\cdot)$ returned by Algorithm~\ref{alg:main_algo} is $\epsilon$-differentially private and satisfies
	$$1 - \alpha \leq \PP{Y\in \C(X)} \leq 1-\alpha + O((n\epsilon)^{-1}).$$
\end{theorem}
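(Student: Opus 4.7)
The plan is to separate the statement into three subclaims: \textbf{(i)} $\epsilon$-differential privacy of the map from the calibration dataset to the set function $\C(\cdot)$; \textbf{(ii)} the lower coverage bound $\PP{Y\in\C(X)}\ge 1-\alpha$; and \textbf{(iii)} the matching upper bound $\PP{Y\in\C(X)}\le 1-\alpha + O((n\epsilon)^{-1})$. All three hinge on the guarantees of the $\epsilon$-DP quantile subroutine applied to the scores $\{s_i\}_{i=1}^n$, so the first step is to instantiate a subroutine with a clean rank-error bound (for instance, an exponential mechanism restricted to the order statistics, whose worst-case and tail rank errors are both $O(1/\epsilon)$).

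For \textbf{(i)}, I would argue by post-processing. The calibration data enters $\C(\cdot)$ only through $\hat s$, and exchanging one calibration point changes at most one entry of $\{s_i\}_{i=1}^n$. Feeding $\{s_i\}$ into the $\epsilon$-DP quantile subroutine produces an $\epsilon$-DP output $\hat s$, and $\C(\cdot) = \{y : S_{\hat f}(\cdot, y) \le \hat s\}$ is a deterministic function of $\hat s$ (with $\hat f$ treated as fixed or separately privatized input). The post-processing property of differential privacy then closes the argument.

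For \textbf{(ii)}, the standard split-conformal argument based on exchangeability of $(X,Y)$ with the calibration points gives $\PP{S_{\hat f}(X,Y) \le s_{(\lceil (1-\alpha)(n+1)\rceil)}} \ge 1-\alpha$, where $s_{(k)}$ denotes the $k$-th order statistic of $\{s_i\}$. It therefore suffices to ensure that the returned $\hat s$ dominates $s_{(\lceil (1-\alpha)(n+1)\rceil)}$ pointwise in the DP noise, because then monotonicity of the indicator in $\hat s$ combined with exchangeability gives the coverage bound after taking expectations over both the data and the noise. My plan is to target the rank $\lceil (1-\alpha)(n+1)\rceil + k$, where $k = O(1/\epsilon)$ is chosen to exceed the subroutine's worst-case downward rank deviation. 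A shift of $O(1/\epsilon)$ in rank translates to a shift of $O(1/(n\epsilon))$ in quantile level, which is exactly the budget advertised in the algorithm.

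For \textbf{(iii)}, I would again use the rank concentration of the DP subroutine, this time upward: with probability at least $1-\beta$, $\hat s \le s_{(\lceil (1-\alpha+O(1/(n\epsilon)))(n+1)\rceil)}$. Combined with the conformal \emph{upper} bound under a mild continuity assumption on the score distribution (giving $\PP{S_{\hat f}(X,Y) \le s_{(k)}} \le k/(n+1)$), this yields $\PP{Y\in\C(X)}\le 1-\alpha + O((n\epsilon)^{-1})$, with the failure event of probability $\beta$ absorbed by setting $\beta = \Theta(1/(n\epsilon))$. The main obstacle is engineering the DP quantile step so that its rank error yields both a one-sided pointwise guarantee (needed for (ii)) and a tight two-sided tail bound (needed for (iii)) at a matching $O(1/\epsilon)$ scale, and handling ties in the conformity scores so that the conformal upper bound in (iii) is not inflated by a constant factor.
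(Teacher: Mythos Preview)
Your decomposition and the arguments for \textbf{(i)} (post-processing of the $\epsilon$-DP quantile) and \textbf{(iii)} (high-probability upward rank bound plus the conformal upper bound, with the failure probability absorbed into the slack) are correct and match the paper's Proposition~\ref{prop:dp_conformal_is_private} and Theorem~\ref{thm:coverage_upper}.

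The gap is in \textbf{(ii)}. You want $\hat s \ge s_{(\lceil(1-\alpha)(n+1)\rceil)}$ to hold \emph{pointwise in the DP noise}, achieved by shifting the target rank up by the subroutine's ``worst-case downward rank deviation.'' But no $\epsilon$-DP quantile mechanism has a finite worst-case rank error: under pure $\epsilon$-DP, any output with positive probability on one dataset has positive probability on every dataset (iterate the defining inequality along a chain of neighbors), so an exponential mechanism over order statistics or bin edges puts strictly positive mass on the smallest possible output for every input. Its worst-case downward rank error is therefore $n-1$, not $O(1/\epsilon)$, and no finite rank shift can deliver the pointwise domination your argument needs. The paper instead uses for the lower bound the same high-probability device you already employ in \textbf{(iii)}: by Lemma~\ref{lemma:DP_quantile_accuracy}, the event $E=\{\hat F(\hat s)\ge \tilde q - O((n\epsilon)^{-1}\log m)\}$ has probability at least $1-\gamma\alpha$; the proof of Theorem~\ref{thm:coverage} splits $\E[F(\hat s)]$ on $E$, bounds the $E^c$ contribution trivially by zero, and on $E$ controls $\E[F(\hat F^{-1}(\cdot))]$ via a $\textsc{Beta}$ stochastic-dominance lemma (Lemma~\ref{lemma:beta_lemma}) in place of the raw exchangeability inequality. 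The multiplicative $(1-\gamma\alpha)$ loss from $E^c$ is compensated by the $(1-\gamma\alpha)^{-1}$ factor built into the target level $\tilde q$ in \eqref{eq:q-level}. Your plan for \textbf{(ii)} becomes correct once ``worst-case'' is replaced by ``with probability $1-\gamma\alpha$'' and the failure event is carried through exactly as you already do in \textbf{(iii)}; as written, the pointwise step does not go through.
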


We obtain a gap between the lower and upper bound on the probability of coverage to be roughly of the order $O((n\epsilon)^{-1})$, similar to the standard gap $O(n^{-1})$ without the privacy requirement. With this, we provide the first theoretical insight into the cost of privacy in conformal prediction. To shed further light on the properties of our procedure, we perform an extensive empirical study where we evaluate the tradeoff between the level of privacy on one hand, and the coverage and size of prediction sets on the other.

\subsection{Related work}
Differential privacy \cite{dwork2006calibrating} has become the de facto standard for privacy-preserving data analysis, as witnessed by its widespread adoption in large-scale systems such as those by Google \cite{erlingsson2014rappor, bittau2017prochlo}, Apple \cite{apple_privacy}, Microsoft \cite{ding2017collecting}, and the US Census Bureau \cite{abowd2018us, dwork2019differential}.
This increasing adoption of differential privacy goes hand in hand with steady progress in differentially private model training, ranging across both convex \cite{chaudhuri2011differentially, bassily2014private} and non-convex \cite{abadi2016deep, neel2020oracle} settings. 
Our work complements these works by proposing a procedure that can be combined with any differentially private model training algorithm to account for the uncertainty of the resulting predictive model by producing a prediction set function with formal guarantees. 
At a technical level, closest to our algorithm on the privacy side are existing methods for reporting histograms and quantiles in a privacy-preserving fashion \cite{dwork2006calibrating, xu2013differentially, lei2011differentially, smith2011privacy, feldman2017generalization}.
Finally, there have also been significant efforts to quantify uncertainty with formal privacy guarantees through various types of private confidence intervals \cite{karwa2017finite, sheffet2017differentially, gaboardi2019locally, wang2019differentially}. 
While prediction sets resemble confidence intervals, they are fundamentally different objects as they do not aim to cover a fixed parameter of the population distribution, but rather a randomly sampled outcome. 
As a result, existing methods for differentially private confidence intervals do not generalize to our problem setting.

Prediction sets as a way to represent uncertainty are a classical idea, going back at least to tolerance regions in the 1940s \cite{wilks1941, wilks1942, wald1943, tukey1947}. See Krishnamoorthy \& Mathew~\cite{krishnamoorthy2009statistical} for an overview of tolerance regions and Park et al.~\cite{Park2020PAC} for a recent application to deep learning models.
Conformal prediction \cite{vovk1999machine, vovk2005algorithmic, shafer2008tutorial} is a related way of producing predictive sets with finite-sample guarantees. Most relevant to the present work, \emph{split conformal prediction} \cite{papadopoulos2002inductive, lei2013conformal, lei2018distribution} is a convenient version that uses data splitting to give prediction sets in a computationally efficient way. Vovk~\cite{vovk2015cross} and Barber et al.~\cite{barber2019jackknife} refine this approach to re-use data for both training and calibration, improving statistical efficiency.
Recent work has targeted desiderata such as small set sizes \cite{Sadinle2016LeastAS, angelopoulos2020sets}, coverage that is approximately balanced across feature space \cite{vovk2012conditional, barber2019limits, romano2019conformalized, izbicki2019flexible, romano2020classification, guan2020conformal, cauchois2020knowing}, and coverage that is balanced across classes \cite{lei2014classification, Sadinle2016LeastAS, hechtlinger2018cautious, guan2019prediction}. Further extensions address problems in distribution estimation \cite{vovk2017nonparametric, vovk2019conformal}, handling or testing distribution shift \cite{tibshirani2019conformal, cauchois2020robust, hu2020distributionfree}, causal inference \cite{lei2020conformal}, and controlling other notions of statistical error \cite{bates2021distributionfree}. 
We suggest~\cite{angelopoulos2021gentle} and \cite{shafer2008tutorial} as introductory tutorials on conformal prediction for the unfamiliar reader.
Lastly, we highlight two alternative approaches with a similar goal to conformal prediction. First, the calibration technique in Jung et al.~\cite{jung2020moment} and Gupta et al.~\cite{gupta2021online} generates prediction sets via the estimation of higher moments across many overlapping sub-populations. Second, there is a family of techniques that define a utility function balancing set-size and coverage and then search for set-valued predictors to maximize this utility \cite{grycko1993classification, delcoz2009learning, mortier2020efficient}. The present work builds on split conformal prediction, but modifies the calibration step to preserve privacy.

\section{Preliminaries}

In this section, we formally introduce the main concepts in our problem setting. 
Split conformal prediction assumes access to a predictive model, $\hat{f}$, and aims to output \emph{prediction sets} that achieve coverage by quantifying the uncertainty of $\hat{f}$ and the intrinsic randomness in $X$ and $Y$.
It quantifies this uncertainty using a \emph{calibration dataset} consisting of $n$ i.i.d.\ samples, $\{(X_i,Y_i)\}_{i=1}^n$, that were not used to train $\hat{f}$. 
The calibration proceeds by defining a \emph{score function} $S_{\hat{f}}: \mathcal{X} \times \mathcal{Y} \to \mathbb{R}$.  Without loss of generality we take the range of this function to be the unit interval $[0, 1]$.
The reader should think of the score as measuring the degree of consistency of the response $Y$ with the features $X$ based on the predictive model $\hat{f}$ (e.g., the size of the residual in a regression model), but any score function would lead to correct coverage. To simplify notation we will write $S(\cdot,\cdot)$ to denote the score, where we implicitly assume an underlying model $\hat{f}$. 
From this score function, one forms prediction sets as follows:
\begin{equation}
\label{eq:predictive_set_definition}
    \C(x) = \{y : S(x, y) \le \hat{s}\},
\end{equation}
for a choice of $\hat{s}$ based on the calibration dataset.
In particular, $\hat{s}$ is taken to be a quantile of the calibration scores $s_i = S(X_i, Y_i)$ for $i=1,\dots,n$.
In non-private conformal prediction, one simply takes $\hat{s}$ to be the $\big((n+1)(1-\alpha)\big) / n$ quantile, and then a standard argument shows that the coverage property in \eqref{eq:predictive_set_coverage} holds. In this work we show how to take a modified private quantile that maintains this coverage guarantee.

As a concrete example of standard split conformal prediction, consider classifying an image in $\mathcal{X}=\mathbb{R}^{m\times d}$ into one of a thousand classes, $\mathcal{Y}=\{1,...,1000\}$.
Given a standard classifier outputting a probability distribution over the classes, $\hat{f}:\mathcal{X} \to [0,1]^{1000}$ (e.g., the output of a softmax layer), we can define a natural score function based on the activation of the correct class, $S(x,y)=1-\hat f(x)_y$.
Then we take $\hat{s}$ as the upper $\lceil 0.9 (n+1)\rceil / n$ quantile of the calibration scores $s_1,\dots, s_n$ and define $\C$ as in equation~\eqref{eq:predictive_set_definition}. That is, we take as the cutoff $\hat{s}$ the value such that if we include all classes with estimated probability greater than $1 - \hat{s}$, our sets have (only slightly more than) 90\% coverage on the calibration data.
The result $\C(x)$ on a test point is then a set of plausible classes guaranteed to contain the true class with probability 90\%. Our proposed method will follow a similar workflow, but with a slightly different choice of $\hat{s}$ to guarantee both coverage and privacy.

We next formally define differential privacy. We say that two datasets $\D,\D'$ are \emph{neighboring} if they differ in a single element, i.e., either dataset can be obtained from the other by removing a single entry. For example, $\D\in(\X\times\Y)^n$ and $\D' = \D\setminus \{(X_0,Y_0)\}$, for some $(X_0,Y_0)\in \D$. Differential privacy then requires that two neighboring datasets produce similar distributions on the output.
\begin{definition}[Differential privacy \cite{dwork2006calibrating}]
A randomized algorithm $\A$ is $\epsilon$-\emph{differentially private} if for all neighboring datasets $\D$ and $\D'$, it holds that:
$$\PP{\A(\D) \in \O} \leq e^\epsilon \PP{\A(\D') \in \O},$$
for all measurable sets $\O$.
\end{definition}
In short, if no adversary observing the algorithm's output can distinguish between $\D$ and a dataset $\D'$ with the $i$-th entry removed, the presence of individual $i$ in the analysis cannot be detected and hence their privacy is not compromised.

A key ingredient to our procedure is a privatized quantile of the conformity scores. We obtain this private quantile by discretizing the scores into bins and applying the exponential mechanism \cite{mcsherry2007mechanism}, one of the most ubiquitous tools in differential privacy. Our private quantile routine is then an extension of the private median routine proposed by Feldman and Steinke~\citep{feldman2017generalization} to handle arbitrary quantiles. Specifically, let us fix a number of bins $m\in\N$, as well as edges $0 \equiv e_0 < e_1 < ... < e_{m-1} < e_m \equiv 1$. The edges define the bins $I_j = (e_{j-1}, e_j]$, $j=1, ..., m$.
We use Algorithm~\ref{alg:private_quantile} with appropriately chosen quantile level $q$ as a subroutine of our main conformal procedure.

\begin{algorithm}[H]
\SetAlgoLined
{\bf input:} calibration scores $\{s_1,\dots,s_{n}\}$, bins $\{I_1,\dots, I_m\}$, privacy level $\epsilon$, level $q\in[0,1]$\newline
For all $1\leq i\leq n$, compute discretized score $[s_i] = e_j$, where $s_i \in I_j$\newline
For all $1\leq j\leq m$, compute $w_j = \max\left\{\frac{\#\{i: [s_i] < e_j\}}{q},\frac{\#\{i: [s_i] > e_j\}}{1-q}\right\}$\vspace{0.1cm}\newline
Let $\hat s = e_j$ with probability $e^{-\frac{\epsilon \min(q, 1-q) w_j}{2}}/\sum_{j'=1}^m e^{-\frac{\epsilon \min(q, 1-q) w_{j'}}{2}}$\newline
{\bf output:} private quantile $\hat s$
\caption{Differentially private quantile}
\label{alg:private_quantile}
\end{algorithm}



\section{Main algorithm and guarantees}

We next precisely state our main algorithm and its formal guarantees. First, our algorithm has a calibration step, Algorithm~\ref{alg:private_conformal}, carried out one time using the calibration scores $s_1,\dots,s_n$ as input; this is the heart of our proposed procedure. The output of this step is a cutoff $\hat{s}$ learned from the calibration data. With this in hand, one forms the prediction set for a test point $x$ as in equation~\eqref{eq:predictive_set_definition}, which for completeness we state in Algorithm~\ref{alg:private_prediction_set}.

\begin{algorithm}[H]
\SetAlgoLined
\textbf{input:} calibration scores $\{s_1,\dots,s_{n}\}$, privacy parameter $\epsilon$, coverage level $\alpha$, bins $\{I_1,\dots,I_m\}$\newline
\hspace{0.08cm} Compute $\tilde q$-quantile of $\{s_1,\dots,s_n\}$ via Algorithm~\ref{alg:private_quantile}, where $\tilde q$ is defined in \eqref{eq:q-level}, denoted $\hat s$\newline
{\bf output:} calibrated score cutoff $\hat{s}$
\caption{Differentially private calibration}
\label{alg:private_conformal}
\end{algorithm}

\begin{algorithm}[H]
\SetAlgoLined
\textbf{input:} test point $x$, calibrated score cutoff $\hat{s}$\newline
{\bf output:} prediction set  as in \eqref{eq:predictive_set_definition}: $\C(x) = \{y : S(x,y) \le \hat{s} \}$.
\caption{Differentially private prediction set}
\label{alg:private_prediction_set}
\end{algorithm}

This algorithm both satisfies differential privacy and guarantees correct coverage, as stated next in Proposition~\ref{prop:dp_conformal_is_private} and Theorem~\ref{thm:coverage}, respectively. 
The privacy property is a straightforward consequence of the privacy guarantees on the exponential mechanism~\cite{mcsherry2007mechanism}.

\begin{proposition}[Privacy guarantee]
\label{prop:dp_conformal_is_private}
Algorithm~\ref{alg:private_conformal} is $\epsilon$-differentially private.
\end{proposition}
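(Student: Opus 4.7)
The plan is to reduce Proposition~\ref{prop:dp_conformal_is_private} to the privacy guarantee of the exponential mechanism. First, observe that Algorithm~\ref{alg:private_conformal} consists of a single call to Algorithm~\ref{alg:private_quantile} with quantile level $\tilde q$ that is a deterministic function of $\alpha$ and $n$ only (not the calibration data). By the post-processing property of differential privacy, it therefore suffices to show that Algorithm~\ref{alg:private_quantile} is $\epsilon$-differentially private when run with any fixed $q$.

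Next, I would recognize Algorithm~\ref{alg:private_quantile} as an instance of the McSherry--Talwar exponential mechanism on the finite output set $\{e_1,\dots,e_m\}$, with utility $u(\D,e_j) = -w_j(\D)$ and sampling probability $\propto \exp(-\epsilon w_j / 2)$. The standard privacy analysis of the exponential mechanism requires a bound on the sensitivity of $u$ under a change in a single calibration point, together with (when applicable) monotonicity in order to recover the right constants in the exponent.

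The sensitivity/monotonicity step is where the real content sits. For neighboring datasets $\D$ and $\D'$ differing in a single score $s_0$, at most one of the two counts $\#\{i:[s_i]<e_j\}$ and $\#\{i:[s_i]>e_j\}$ can change, and by at most one, so each summand in the definition of $w_j$ changes by at most $1/q$ or $1/(1-q)$ respectively. Moreover, both counts are nondecreasing when a point is added, so $w_j$ itself is monotone nondecreasing in the dataset, which means $-w_j$ is a monotone utility. This monotonicity lets us bound the ratio of normalizing constants $\sum_{j'} e^{-\epsilon w_{j'}(\D)/2}$ and $\sum_{j'} e^{-\epsilon w_{j'}(\D')/2}$ by comparing them termwise in the direction that favors the smaller dataset, saving the factor of $2$ that normally appears in the exponential mechanism's privacy loss bound.

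The main technical obstacle is precisely this sensitivity/monotonicity analysis, because one must track how the $\max$ in the definition of $w_j$ interacts with the $1/q$ and $1/(1-q)$ normalizations, and then combine this with the factor $\epsilon/2$ in the exponent so that the privacy loss bound comes out to exactly $\epsilon$ rather than a multiple of it. Once this bounding of ratios for neighboring datasets is in place, the proof is concluded by the standard two-step calculation for the exponential mechanism (ratio of numerators plus ratio of partition functions), and $\epsilon$-DP for Algorithm~\ref{alg:private_conformal} then follows by post-processing.
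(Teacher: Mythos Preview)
Your approach—reduce to the exponential mechanism, analyze the sensitivity of $w_j$, and invoke monotonicity to handle the normalizing constant—is exactly what the paper intends; the paper's own justification is a single sentence citing the privacy of the exponential mechanism, so your plan is a fleshed-out version of the same argument.

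However, there is a genuine gap at precisely the point you flag as the ``main technical obstacle,'' and it is not merely cosmetic. Your sensitivity computation is correct: adding one calibration score increases $w_j$ by at most $1/q$ (if the new discretized score lands below $e_j$) or by at most $1/(1-q)$ (if it lands above), and $w_j$ is monotone nondecreasing under addition. Carrying these facts through the monotone exponential-mechanism calculation with sampling weight $e^{-\epsilon w_j/2}$ gives, for neighboring $\D\subset\D'$,
\[
\max\left\{\frac{\Pr[\hat s=e_j\mid \D]}{\Pr[\hat s=e_j\mid \D']},\ \frac{\Pr[\hat s=e_j\mid \D']}{\Pr[\hat s=e_j\mid \D]}\right\}\ \le\ \exp\!\left(\frac{\epsilon}{2}\,\max\!\left\{\frac{1}{q},\frac{1}{1-q}\right\}\right),
\]
which equals $e^{\epsilon}$ only when $q=1/2$ (the Feldman--Steinke median). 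For the level $\tilde q$ actually used in Algorithm~\ref{alg:private_conformal}—which exceeds $1-\alpha$ and is typically well above $1/2$—this bound is $\exp\!\big(\epsilon/(2(1-\tilde q))\big)>e^{\epsilon}$. Monotonicity does save a factor of two relative to the generic exponential-mechanism analysis, but that saving is already reflected in the bound above; it does not additionally cancel the $1/q$ and $1/(1-q)$ normalizations. So the step where you ``combine this with the factor $\epsilon/2$\dots so that the privacy loss bound comes out to exactly $\epsilon$'' cannot be completed as written, and the paper's one-line appeal to the exponential mechanism glosses over the same issue rather than resolving it.
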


Therefore, the main challenge for theory lies in understanding how to compensate for the added differentially private noise in order to get strict, distribution-free coverage guarantees.

\begin{theorem}[Coverage guarantee]
\label{thm:coverage}
Fix the differential privacy level $\epsilon>0$ and miscoverage level $\alpha\in (0.5,1)$, as well as a free parameter $\gamma\in(0,1)$. 
Let
\begin{equation}
\label{eq:q-level}
\tilde q = \frac{(n+1)(1-\alpha)}{n(1-\gamma\alpha)} + \frac{2}{\epsilon n}\log\left(\frac{m}{\gamma\alpha}\right),
\end{equation}
and let $\hat s$ be the output of Algorithm \ref{alg:private_quantile} at level $\min\{\tilde q,1\}$. Then, the prediction sets in \eqref{eq:predictive_set_definition} with cutoff $\hat s$ satisfy the coverage property in~\eqref{eq:predictive_set_coverage}.
\end{theorem}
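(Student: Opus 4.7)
The plan is to decompose the coverage guarantee into a classical exchangeability piece and an exponential-mechanism concentration piece, with the free parameter $\gamma$ governing how the miscoverage budget $\alpha$ is split between them: roughly $\gamma\alpha$ absorbs the randomness of the private quantile, while $(1-\gamma)\alpha$ accommodates the inflation of the target quantile level above the standard $\frac{(n+1)(1-\alpha)}{n}$. First I reduce to discretized scores: since $\hat s$ is always a bin edge and $[s]=e_j$ exactly when $s\in(e_{j-1},e_j]$, the events $\{s_{n+1}\le\hat s\}$ and $\{[s_{n+1}]\le\hat s\}$ coincide, so it suffices to work with the $[s_i]$. Set $\Delta=\frac{2}{\epsilon n}\log(m/(\gamma\alpha))$ and $q^\star=\frac{(n+1)(1-\alpha)}{n(1-\gamma\alpha)}$ so that $\tilde q=q^\star+\Delta$, and let $e^\star$ be the $\lceil nq^\star\rceil$-th smallest of $[s_1],\dots,[s_n]$; this $e^\star$ is precisely the threshold non-private split conformal would use to guarantee coverage at level $\frac{1-\alpha}{1-\gamma\alpha}$. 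Define the \emph{good event} $G=\{\hat s\ge e^\star\}$.

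Second, I bound $\PP{G^c\mid s_1,\dots,s_n}$ using the standard utility analysis of the exponential mechanism. Write $a_j=\#\{i:[s_i]<e_j\}$ and $b_j=\#\{i:[s_i]>e_j\}$, so $w_j=\max\{a_j/\tilde q,\,b_j/(1-\tilde q)\}$. At the smallest bin edge whose empirical CDF exceeds $\tilde q$ one has $a_j<n\tilde q$ and $b_j\le n(1-\tilde q)$, so $w^\star:=\min_j w_j\le n$. On the other hand, any ``bad'' edge $e_j<e^\star$ must satisfy $b_j\ge n(1-q^\star)$, whence
\[
w_j \;\ge\; \frac{n(1-q^\star)}{1-\tilde q} \;=\; n + \frac{n\Delta}{1-\tilde q} \;\ge\; n + n\Delta.
\]
A union bound over the $m$ edges then gives $\PP{G^c\mid s_1,\dots,s_n}\le m\,e^{-\epsilon n\Delta/2}=\gamma\alpha$, uniformly in the calibration data.

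Third, I combine the two ingredients. Because $G^c$ depends only on the private randomness conditional on the data, and $[s_{n+1}]$ is independent of that randomness,
\[
\PP{[s_{n+1}]\le\hat s} \;\ge\; \PP{[s_{n+1}]\le e^\star,\,G} \;\ge\; (1-\gamma\alpha)\,\PP{[s_{n+1}]\le e^\star},
\]
and by exchangeability of $[s_1],\dots,[s_{n+1}]$ the second factor is at least $\frac{\lceil nq^\star\rceil}{n+1}\ge\frac{1-\alpha}{1-\gamma\alpha}$. The two $(1-\gamma\alpha)$ factors cancel, yielding exactly $1-\alpha$. The degenerate case $\tilde q\ge 1$ is handled by the $\min\{\tilde q,1\}$ clause: the algorithm then forces $\hat s=e_m=1$ and coverage is trivially one.

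The step I expect to be the main obstacle is the utility bound for bad edges: one must carefully handle possible ties of the $[s_i]$ at common edges when converting ``$e_j<e^\star$'' into $b_j\ge n(1-q^\star)$, and simultaneously verify $w^\star\le n$ so that the exponential-mechanism ratio really is bounded by $m\,e^{-\epsilon n\Delta/2}$. Once this uniform-in-data bound on $\PP{G^c}$ is in place, the exchangeability step is routine and the algebra collapses to exactly $1-\alpha$ by design of the inflation factor $1/(1-\gamma\alpha)$.
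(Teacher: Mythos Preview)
Your argument is correct and follows essentially the same decomposition as the paper: define a ``good'' event on which the private quantile dominates the $\lceil nq^\star\rceil$-th order statistic of the discretized calibration scores, bound the bad event by $\gamma\alpha$ via the exponential-mechanism utility, and then invoke the conformal/order-statistic bound on the good event so that the $(1-\gamma\alpha)$ factors cancel. The only cosmetic differences are that the paper packages the utility step as a separate lemma (its Lemma~\ref{lemma:DP_quantile_accuracy}, which is where the hypothesis $\tilde q\ge 0.5$ enters) rather than arguing it inline as you do, and that the paper obtains $\E[F(e^\star)]\ge \frac{\lceil nq^\star\rceil}{n+1}$ via a Beta-distribution stochastic-dominance lemma rather than the direct exchangeability argument you use; both routes yield the identical inequality.
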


\begin{remark}
We can choose $\gamma$ to minimize $\tilde q$, which leads to smallest prediction sets. 
The optimal value $\gamma^*$ depends only on $n,m,$ and $\alpha$, and can be found by taking a derivative of~\eqref{alg:private_quantile}; see Appendix~\ref{app:exp_details}.
\end{remark}

Note that the significance level $\tilde{q}$ in~\eqref{eq:q-level} is just a slightly inflated version of the non-private conformal quantile: $\tilde{q} \ge \frac{(n+1)(1-\alpha)}{n} \ge 1 - \alpha$. 
Indeed, taking $\epsilon \to \infty, \gamma \to 0$ in~\eqref{alg:private_quantile} recovers the non-private quantile.
Intuitively, we must raise the significance level to compensate for the noise introduced to preserve privacy. 
We informally sketch the main ideas in the proof, deferring the details to the Appendix.

\begin{proof}[Proof sketch]
We can write the probability of coverage as:
\begin{equation*}
    \PP{Y \in \C(X)} = \E\left[F(\hat s)\right],
\end{equation*}
where $F$ is the distribution of appropriately discretized empirical scores. We observe that for all $\tilde q$, the exponential mechanism with input $\tilde q$ and $s_1,\dots,s_n$ returns an empirical quantile no smaller than the $\tilde q - O(1/(n\epsilon))$ empirical quantile. This allows us to write
\begin{equation*}
\E\left[F(\hat s)\right] \geq (1-\gamma\alpha) \E\left[F(\hat {F}^{-1}(\tilde q - O(1/(n\epsilon)))\right],
\end{equation*}
where $\hat F$ denotes the empirical distribution of the discretized scores. For any $q$, the random variable $F(\hat {F}^{-1}(q))$ is distributed as the $\lceil n q\rceil$-th order statistic of a super-uniform distribution, which implies that it can be stochastically lower bounded by the $\lceil n q\rceil$-th order statistic of a uniform distribution. This order statistic follows a beta distribution with known parameters, whose expectation can hence be evaluated analytically. Carefully choosing $\tilde q$ as a function of this expectation completes the proof of the theorem.
\end{proof}

\begin{figure}[t]
    \centering
    \includegraphics[width=\textwidth]{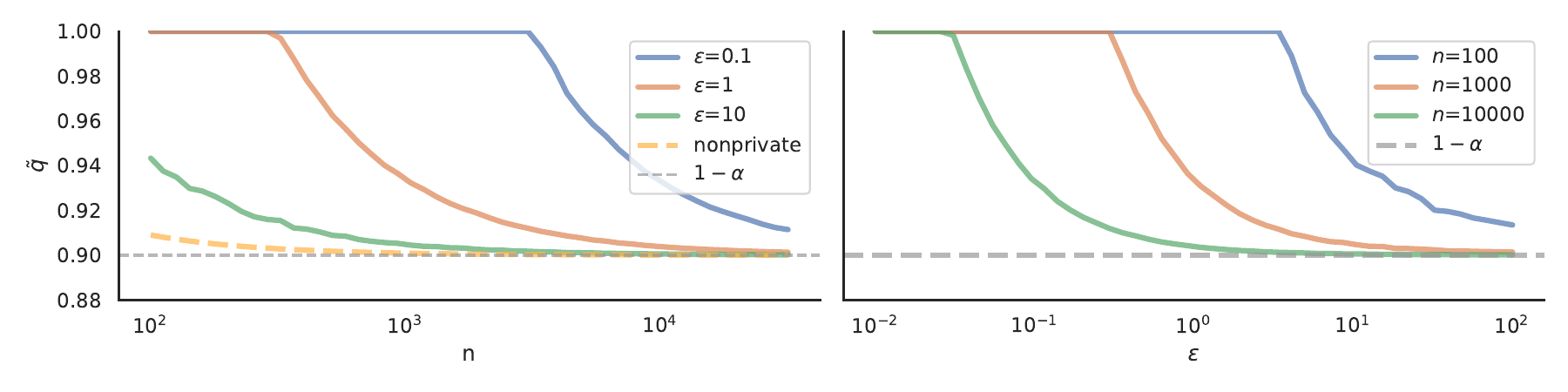}
    \vspace{-0.7cm}
    \caption{\textbf{The private quantile $\tilde{q}$ as $n$ and $\epsilon$ grow.}
    We demonstrate the adjusted quantile from~\eqref{eq:q-level} as $n$ and $\epsilon$ increase, with automatically chosen values for $m$ and $\gamma$ described in Appendix~\ref{app:exp_details}.
    As the number of samples grows and the privacy constraint relaxes, the procedure chooses a less conservative quantile, eventually approaching the limiting value $1-\alpha$.
    The mild fluctuations in the curves are due to differing choices of $m^*$ and $\gamma$.
    }
    \label{fig:qhat}
\end{figure}

With the validity of Algorithm \ref{alg:private_conformal} established, we next prove that the algorithm is not too conservative in the sense that the coverage is not far above $1-\alpha$.
A key quantity in our upper bound is 
$$p_{\max}^m := \max_{1\leq j\leq m} \PP{s_1 \in I_j}.$$
This quantity captures the impact of the score discretization. Smaller $p_{\max}^m$ corresponds to mass spread more evenly throughout the bins.
For well-behaved score functions, we expect $p_{\max}^m$ to scale as $O(m^{-1})$. Indeed, if the scores have any continuous density on $[0,1]$ bounded above and we take uniformly spaced bins, then $p_{\max}^m = O(m^{-1})$. In terms of $p_{\max}^m$, we have the following upper bound.
\begin{theorem}[Coverage upper bound]
\label{thm:coverage_upper}
The prediction sets in \eqref{eq:predictive_set_definition} with $\hat s$ is as in Theorem \ref{thm:coverage}, satisfy the following coverage upper bound:
\begin{equation*}
    \PP{Y \in \C(X)} \leq 1 - (1 - \gamma)\alpha + (1-\gamma\alpha)\left( 2p_{\textnormal{max}}^m + \frac{2\left(1 +\max\left\{\frac{\tilde q}{1-\tilde q},1\right\}\right)\log(m/(\gamma\alpha))}{(n+1) \epsilon }\right),
\end{equation*}
where $\tilde q$ is defined in \eqref{eq:q-level}.
\end{theorem}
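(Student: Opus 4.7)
The plan is to mirror the proof of Theorem~\ref{thm:coverage} and obtain an upper rather than a lower bound on $\E[F(\hat s)]$, paying along the way for the score discretization. As in the lower bound, independence of the test point and the calibration data gives $\PP{Y\in\C(X)} = \E[F(\hat s)]$. The key structural observation I would use is that, letting $V_i := F([s_i])$ and $a_{\hat j} := \#\{i: [s_i]<e_{\hat j}\}$, one has $F(\hat s) = F(e_{\hat j}) \le V_{(a_{\hat j}+1)}$, since any $V_i > F(e_{\hat j-1})$ must satisfy $[s_i] \ge e_{\hat j}$ and hence $V_i \ge F(e_{\hat j})$.

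Next, I would apply the standard utility guarantee for the exponential mechanism. Conditionally on the calibration scores, with probability at least $1-\gamma\alpha$ over the mechanism's randomness,
\[
w_{\hat j} \le \min_{j} w_j + \frac{2\Delta w}{\epsilon}\log\!\left(\frac{m}{\gamma\alpha}\right),
\]
where $\Delta w = \max\{1/\tilde q,\, 1/(1-\tilde q)\}$ is the sensitivity of $-w_j$. Since $\min_j w_j \le n$ (attained at any bin edge near the empirical $\tilde q$-quantile) and $a_{\hat j} \le \tilde q\, w_{\hat j}$, this controls $a_{\hat j} \le K := \tilde q\, n + \frac{2\max\{1,\tilde q/(1-\tilde q)\}}{\epsilon}\log(m/(\gamma\alpha))$ on a good event $G$ of conditional probability at least $1-\gamma\alpha$.

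The key step for producing the $(1-\gamma\alpha)$ prefactor is to handle the complementary event $G^c$ not by a crude $+\gamma\alpha$ but by a convex combination that exploits $V_{(\cdot)}\le 1$:
\[
\E[F(\hat s) \mid \{s_i\}] \le V_{(\lfloor K\rfloor+1)}\,\PP{G\mid\{s_i\}} + \PP{G^c\mid\{s_i\}} \le (1-\gamma\alpha)\, V_{(\lfloor K\rfloor+1)} + \gamma\alpha.
\]
Combining this with the stochastic domination $V_i \le F(s_i) + p_{\max}^m$ (which holds pointwise because $[s_i]\ge s_i$ and the gap is at most one bin width, giving $\E[V_{(k)}]\le k/(n+1)+p_{\max}^m$) yields
\[
\E[F(\hat s)] \le (1-\gamma\alpha)\left[\frac{\lfloor K\rfloor+1}{n+1} + p_{\max}^m\right] + \gamma\alpha.
\]

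Finally, substituting the definition of $\tilde q$ from \eqref{eq:q-level} makes $(1-\gamma\alpha)\,\tilde q\, n/(n+1) + \gamma\alpha$ collapse to exactly $1-(1-\gamma)\alpha$, and the remaining slack rearranges into the claimed noise. I expect the main obstacle to be precisely this last rearrangement: the convex-combination trick is what enables the $(1-\gamma\alpha)$ prefactor on the noise, and one must combine it carefully with the algebraic identity $(1-\gamma\alpha)\tilde q = (n+1)(1-\alpha)/n + 2(1-\gamma\alpha)\log(m/(\gamma\alpha))/(n\epsilon)$ and absorb the residual $O(1/(n+1))$ rounding contributions into the $2(1-\gamma\alpha)p_{\max}^m$ term.
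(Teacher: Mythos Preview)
Your proposal follows essentially the same route as the paper: write $\PP{Y\in\C(X)}=\E[F(\hat s)]$, invoke the upper-tail utility bound of the exponential mechanism (Lemma~\ref{lemma:DP_quantile_accuracy}) to define a good event of probability $\ge 1-\gamma\alpha$, split on that event, control $F(\hat s)$ via the $\textsc{Beta}$/order-statistic bound (Lemma~\ref{lemma:beta_lemma}), and substitute the definition of $\tilde q$. Your explicit justification of the convex-combination step---using $V_{(\cdot)}\le 1$ so that $pV+(1-p)\le (1-\gamma\alpha)V+\gamma\alpha$ whenever $p\ge 1-\gamma\alpha$---is in fact more careful than the paper, which asserts the $(1-\gamma\alpha)$ prefactor in its display~\eqref{eq:upper_bound1} without comment.

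The one place to be careful is your last sentence. Absorbing the residual $O(1/(n+1))$ from $\lfloor K\rfloor+1$ into the second $p_{\max}^m$ requires $p_{\max}^m\ge 1/(n+1)$, which need not hold (e.g.\ when $m\gg n$ and scores are nearly uniform, $p_{\max}^m\approx 1/m\ll 1/n$). The paper sidesteps this by spending a $p_{\max}^m$ \emph{twice}: once in passing from $F(\hat s)$ on the good event to $F(\hat F^{-1}(q'))$ (the ``one-bin'' slack in \eqref{eq:upper_bound1}), and once from the upper half of Lemma~\ref{lemma:beta_lemma} in \eqref{eq:upper_bound2}; the ceiling in $\lceil nq'\rceil$ is then dropped in \eqref{eq:upper_bound3} without an explicit $+1$. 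So both arguments have a comparable $O(1/(n+1))$ bookkeeping residue; just be aware that ``absorb into $p_{\max}^m$'' is not literally valid for all $m$, and you may instead want to track the extra $1/(n+1)$ as a separate (harmless) term or mirror the paper's double use of $p_{\max}^m$.
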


If we further assume a weak regularity condition on the scores, then by balancing the rates in the expression above we arrive at an explicit upper bound.
\begin{corollary}[Coverage upper bound, simplified form]
\label{corollary:coverage_upper_simplified}
Suppose that the input scores follow a continuous distribution on $[0,1]$ with a density that is bounded above. 
Take $m \propto n\epsilon$ and $\gamma = 1/m$. Then, the prediction sets in \eqref{eq:predictive_set_definition}, with $\hat s$ as in Theorem \ref{thm:coverage}, satisfy the following upper bound:
\begin{equation*}
    \PP{Y \in \C(X)} \leq 1 - \alpha +  O\left(\frac{\log\big(n\epsilon/\alpha\big)}{n\epsilon}\right).
\end{equation*}
\end{corollary}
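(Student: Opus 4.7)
The plan is to apply Theorem~\ref{thm:coverage_upper} and verify by direct substitution that, under $m \propto n\epsilon$ and $\gamma = 1/m$, each term of its upper bound is at most $O(\log(n\epsilon/\alpha)/(n\epsilon))$. First I would use the regularity assumption to control the discretization error: if the density of $s_1$ is bounded above by some constant $B$ on $[0,1]$ and the bins are uniformly spaced with width $1/m$, then $p_{\max}^m \le B/m$, so with $m \propto n\epsilon$ we get $p_{\max}^m = O(1/(n\epsilon))$.

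Next I would simplify the factor $\max\{\tilde q/(1-\tilde q),1\}$ appearing in Theorem~\ref{thm:coverage_upper}. Under the hypothesis $\alpha \in (0.5,1)$ of Theorem~\ref{thm:coverage}, the leading term $(n+1)(1-\alpha)/(n(1-\gamma\alpha))$ of $\tilde q$ in \eqref{eq:q-level} is strictly below $1/2$, and the additive correction $\frac{2}{\epsilon n}\log(m/(\gamma\alpha))$ is $O(\log(n\epsilon)/(n\epsilon))$, so $\tilde q < 1/2$ for sufficiently large $n\epsilon$ and the max evaluates to $1$. With this in hand I would substitute $\gamma = 1/m$ and $m \propto n\epsilon$ into the remaining expression: the outer leading term rewrites as $1 - (1-\gamma)\alpha = 1 - \alpha + \alpha/m = 1 - \alpha + O(1/(n\epsilon))$, the prefactor $(1 - \gamma\alpha) \le 1$ may be dropped, and $\log(m/(\gamma\alpha)) = \log(m^2/\alpha) = 2\log m + \log(1/\alpha) = O(\log(n\epsilon/\alpha))$. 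Collecting these rates gives
\begin{equation*}
\PP{Y \in \C(X)} \le 1 - \alpha + O(1/(n\epsilon)) + 2 p_{\max}^m + O\!\left(\frac{\log(n\epsilon/\alpha)}{n\epsilon}\right) = 1 - \alpha + O\!\left(\frac{\log(n\epsilon/\alpha)}{n\epsilon}\right),
\end{equation*}
since the logarithmic contribution dominates the two $O(1/(n\epsilon))$ terms.

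The main obstacle is bookkeeping rather than new analysis: one must verify that $\tilde q$ is eventually bounded away from $1$ so that $\tilde q/(1-\tilde q)$ does not blow up. The hypothesis $\alpha > 1/2$ in Theorem~\ref{thm:coverage} makes this automatic; without it, an extra multiplicative $1/\alpha$-type factor would appear, but it could still be absorbed into the $O(\log(n\epsilon/\alpha)/(n\epsilon))$ rate by enlarging constants. Everything else is routine substitution and rate-collection from Theorem~\ref{thm:coverage_upper}.
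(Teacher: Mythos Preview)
Your proposal is correct and is precisely the argument the paper intends: the corollary is stated as an immediate consequence of Theorem~\ref{thm:coverage_upper}, and the paper gives no separate proof, so direct substitution of $m\propto n\epsilon$, $\gamma=1/m$, and $p_{\max}^m=O(1/m)$ into that bound is the whole story.

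One small caveat worth flagging. The condition ``$\alpha\in(0.5,1)$'' in Theorem~\ref{thm:coverage} is almost certainly a typo for the coverage level $1-\alpha\in(0.5,1)$, since the proof of that theorem explicitly invokes $\tilde q\ge 0.5$. Under the intended reading $\alpha<1/2$, one has $\tilde q\to 1-\alpha$ and hence $\max\{\tilde q/(1-\tilde q),1\}\to(1-\alpha)/\alpha$, so the factor is $O(1/\alpha)$ rather than $1$. Your remark that this ``could still be absorbed into the $O(\log(n\epsilon/\alpha)/(n\epsilon))$ rate by enlarging constants'' is only valid when $\alpha$ is treated as a fixed constant in the big-$O$ (which is the standard convention here, and consistent with the informal Theorem~\ref{thm:informal}); if one wanted uniform dependence on $\alpha\to 0$, a $1/\alpha$ factor cannot be hidden inside $\log(1/\alpha)$. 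With that understanding, your bookkeeping is complete and matches the paper.
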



We emphasize that the assumptions on the score distribution are only needed to prove the upper bound; the coverage lower bound holds for any distribution. In any case, these assumptions are very weak, essentially requiring only that the score distribution contains no point masses. In fact, this requirement could even be enforced ex post facto by adding a small amount of tiebreaking noise, in which case we would need no restrictions on the input distribution of scores whatsoever.

The upper bound answers an important practical question: how many bins should we take? If $m$ is too small, then there is little noise addition due to privacy, but the histogram only coarseis an overly coarse approximatesion of the empirical distribution of the scores. 
On the other hand, if $m$ is too large, then the histogram is accurate, but the private quantile in~\ref{eq:q-level} can grow as wellre is a lot of additive noise implied by the requirement of differential privacy. This tension can be observed in the termss in Theorem~\ref{thm:coverage_upper} that haveve a dependence on $m$. Corollary~\ref{corollary:coverage_upper_simplified} suggests that the correct balance---which leads to minimal excess coverage---is to take $m \propto n\epsilon$.
In practice, because the dependence of $\tilde{q}$ on $m$ is only logarithmic, $m$ is often very large.

This upper bound also gives insight to an important theoretical question: what is the cost of privacy in conformal prediction? In non-private conformal prediction, the upper bound is $1-\alpha + O(n^{-1})$ \cite{lei2018distribution}. In private conformal prediction, we achieve an upper bound of $1-\alpha + \tilde O((n\epsilon)^{-1})$, a relatively modest cost incurred by privacy-preserving calibration.

\section{Experiments}
We now turn to an empirical evaluation of differentially private conformal prediction for image classification problems.
In this setting, each image $X_i$ has a single unique class label $Y_i \in \{1,...,K\}$ estimated by a predictive model $\hat{f} : \mathcal{X} \to [0,1]^K$.
We seek to create private prediction sets, $\C(X_i) \subseteq \{1,...,K\}$, achieving coverage as in equation~\eqref{eq:predictive_set_coverage}, using the following score function:
\begin{equation*}
    S(x,y) = 1 - \hat{f}(x)_y,
\end{equation*}
as in Sadinle et al.~\cite{Sadinle2016LeastAS}.
This section evaluates the prediction sets generated by Algorithm~\ref{alg:private_conformal} by quantifying the cost of privacy and the effects of the model, number of calibration points, and number of bins used in our procedure.
We use the CIFAR-10 dataset~\cite{krizhevsky2009learning} wherever we require a privately trained neural network.
Otherwise, we use a non-private model on the ImageNet dataset~\cite{deng2009imagenet}, to investigate the performance of our procedure in a more challenging setting with a large number of possible labels. 
Except where otherwise mentioned, we use an automated number of uniformly spaced bins $m^*$ to construct the privatized CDF.
Appendix~\ref{app:exp_details} describes the algorithm for choosing an approximately optimal value of $m^*$ when the conformal scores are roughly uniform based on fixed values of $n$, $\epsilon$, and $\alpha$.
We finish the section by providing private prediction sets for diagnosing viral pneumonia on the CoronaHack dataset~\cite{perez2020databiology}. 
The reader can reproduce the experiments exactly using our \href{https://github.com/aangelopoulos/private_prediction_sets}{\textcolor{blue}{public GitHub repository}}.
\begin{figure}[t]
    \centering
    \vspace{-0cm}
    \includegraphics[width=0.45\textwidth]{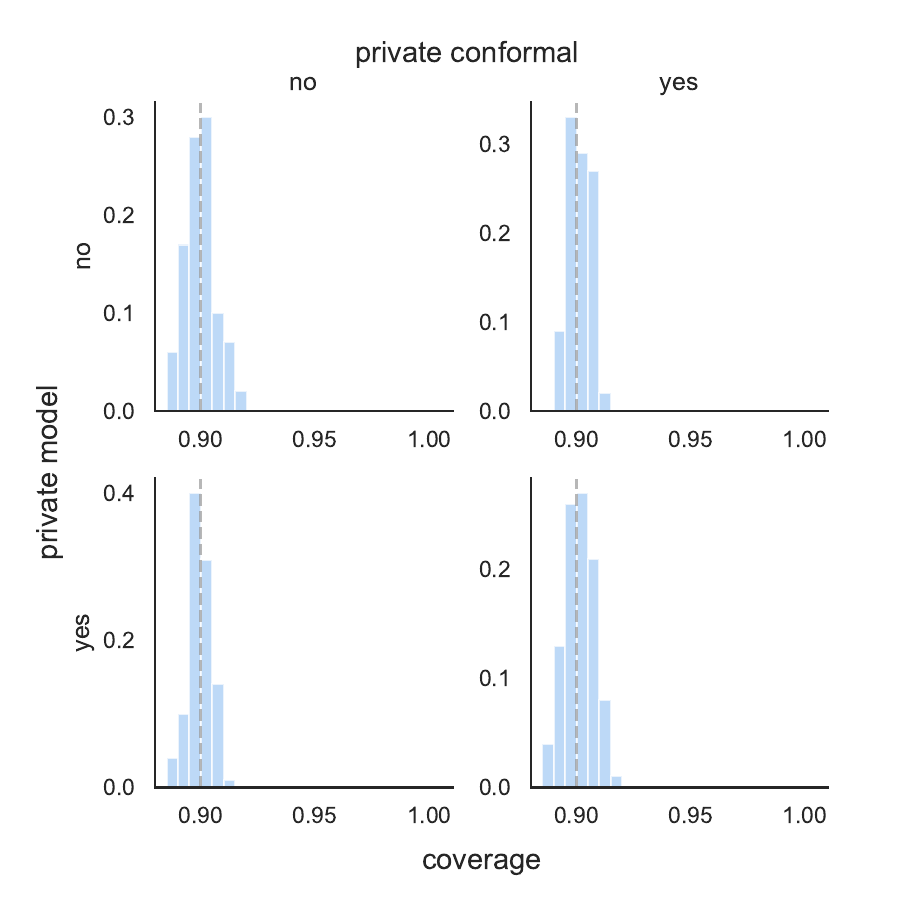}
    \includegraphics[width=0.45\textwidth]{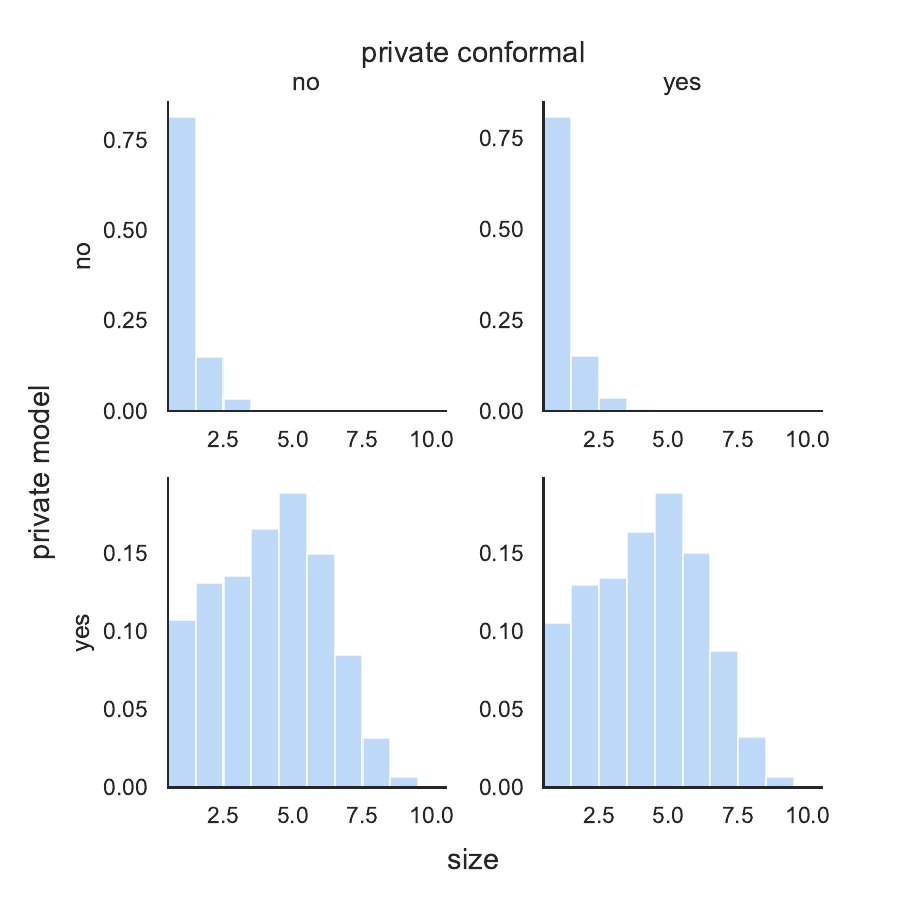}
    \vspace{-0.6cm}
    \caption{\textbf{Coverage and set size with private/non-private models and private/non-private conformal prediction.} 
    We demonstrate histograms of coverage and set size of non-private/private models and non-private/private conformal prediction at the level $\alpha=0.1$, with $\epsilon=8$, $\delta=1e-5$, and $n=5000$.}
    \label{fig:experiment1}
    \vspace{-0cm}
\end{figure}

\begin{figure}
    \centering
    \includegraphics[width=\textwidth]{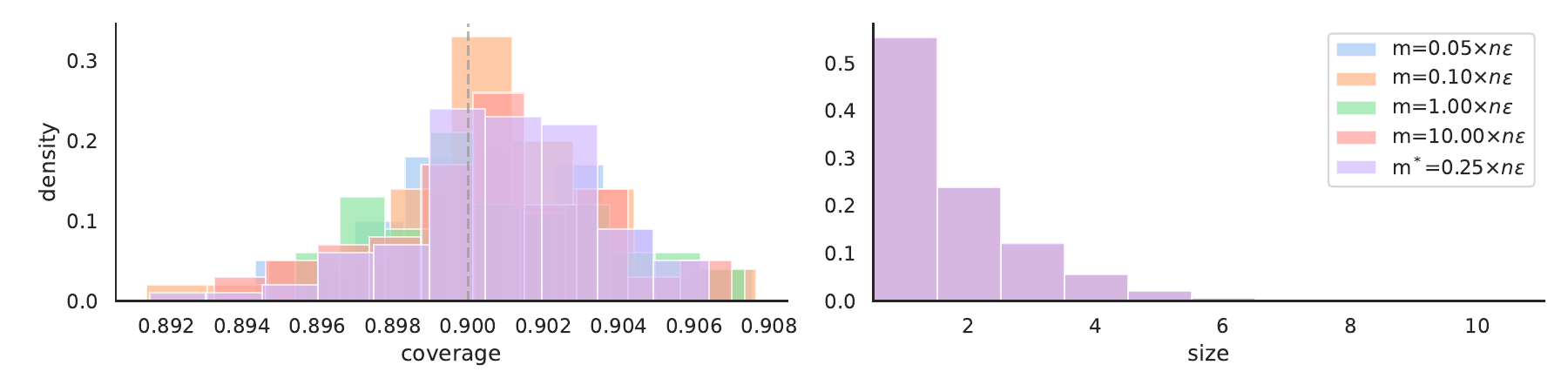}
    \vspace{-0.7cm}
    \caption{\textbf{Coverage and set size for different values of $m$.} 
    We demonstrate the performance on Imagenet of private conformal prediction using a non-private ResNet-152 as the base model at $\alpha=0.1$ and $\epsilon=5$. The coverage is nearly constant over three orders of magnitude of bin numbers.
    All of the histograms on the right hand side are overlapping.
    See Section~\ref{sec:experiment2} for details.}
    \label{fig:experiment2}
\end{figure}

\begin{figure}[t]
    \centering
    \includegraphics[width=\textwidth]{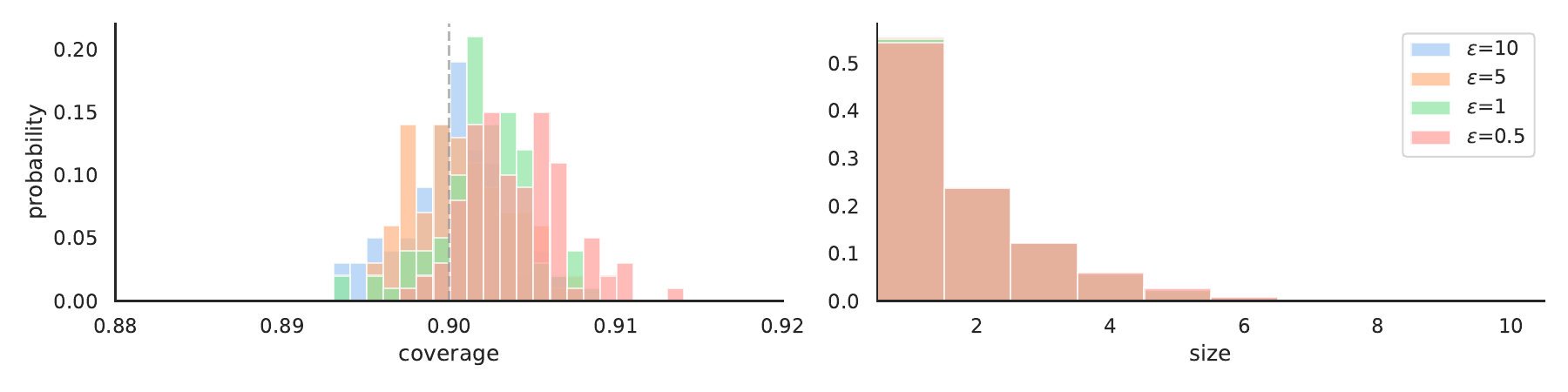}
    \vspace{-0.7cm}
    \caption{\textbf{Coverage and set size for different values of $\epsilon$.} 
    We demonstrate the performance on ImageNet of private conformal prediction using a non-private ResNet-152 as the base model with $\alpha=0.1$. The coverage improves slightly for liberal (large) $\epsilon$, although the cost of privacy is evidently very low. 
    All of the histograms on the right hand side are overlapping.
    See Section~\ref{sec:experiment3} for details.}
    \label{fig:experiment3}
\end{figure}
\begin{figure}[t]
    \centering
    \includegraphics[width=\textwidth]{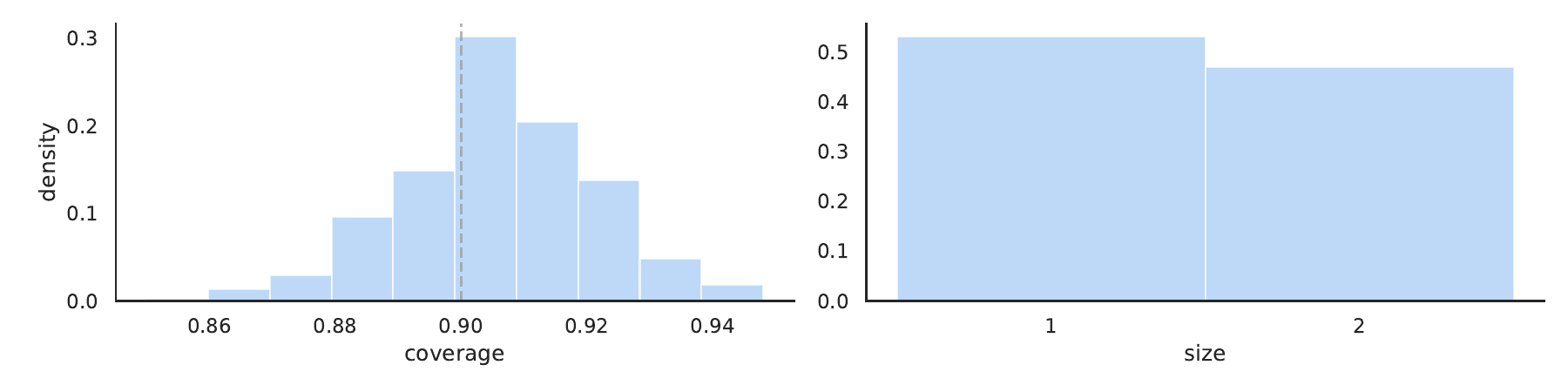}
    \vspace{-0.7cm}
    \caption{\textbf{Coverage and set size on the CoronaHack dataset.} 
    We demonstrate the performance on the CoronaHack dataset of private conformal prediction using a non-private ResNet-18 as the base model with $\alpha=0.1$ and $\epsilon = 10$. 
    The median coverage was $90.4\%$.
    See Section~\ref{sec:experiment4} for details.}
    \label{fig:experiment4}
\end{figure}

\subsection{Isolating the effects of private model training and private conformal prediction}
\label{sec:experiment1}

We would like to disentangle the effects of private conformal prediction from those of private model training.
To that end, we report the coverage and set sizes of the following four procedures: private conformal prediction with a private model, non-private conformal prediction with a private model, private conformal prediction with a non-private model, and non-private conformal prediction with a non-private model.
The non-private model and private model are both the same stock convolutional architecture from the \texttt{Opacus} library.
The private model is trained with private SGD \cite{abadi2016deep}, as implemented in the \texttt{Opacus} library, with privacy parameters $\epsilon=8$ and $\delta=1e-5$.
We used the suggested private model training parameters from the \texttt{Opacus} library (see Appendix~\ref{app:exp_details}), as our work does not aim to improve private model training.
The non-private model's accuracy ($73\%$) was significantly higher than that of the private model ($67\%$).

Figure~\ref{fig:experiment1} shows histograms of the coverages and set sizes of these procedures over 1000 random splits of the CIFAR-10 validation set with $n=5000$.
Notably, the results show the price of private conformal prediction is very low, as evidenced by the minuscule increase in set size caused by private conformal prediction.
However, the private model training causes a larger set size due to the private model's comparatively poor performance.
Note that a user desiring a fully private pipeline will use the procedure in the bottom right quadrant of the plot.

\subsection{Varying number of bins $m$}
\label{sec:experiment2}

Here we probe the performance of private prediction sets as the number of uniformly spaced bins $m$ in our procedure changes.
Based on our theoretical results, $m$ should be on the order of $n\epsilon$, with the exact number dependent on the underlying model and the choices of $\alpha$, $n$, and $\epsilon$.
A too-small choice of $m$ coarsely quantizes the scores, so Algorithm~\ref{alg:private_prediction_set} may be forced to round up to a very conservative private quantile. 
A too-large choice of $m$ increases the logarithmic term in~\eqref{eq:q-level}.
The optimal choice of $m$ balances these two factors.

To demonstrate this tradeoff, we performed experiments on ImageNet.
We used a non-private, pre-trained ResNet-152 from the \texttt{torchvision} repository as the base model.
Figure~\ref{fig:experiment2} shows the coverage and set size of private prediction sets over 100 random splits of ImageNet's validation set for several choices of $m$; we used $n=30000$ and evaluated on the remaining $20000$ images.
The experimental results suggest $m^*$ works comparatively well, and that our method is relatively insensitive to the number of bins over several orders of magnitude.

\subsection{Varying privacy level $\epsilon$}
\label{sec:experiment3}

Next we quantify how the coverage changes with the privacy parameter $\epsilon$.
We used $n=30000$ calibration points and $20000$ evaluation points as in Experiment~\ref{sec:experiment3}.
For each value of $\epsilon$ we choose a different value of $m^*$.
Figure~\ref{fig:experiment3} shows the coverage and set size of private prediction sets over 100 splits of ImageNet's validation set for several choices of $\epsilon$.
As $\epsilon$ grows, the procedure becomes less conservative.
Overall the procedure exhibits little sensitivity to $\epsilon$.

\subsection{COVID-19 diagnosis}
\label{sec:experiment4}

Next we show results on the CoronaHack dataset, a public chest X-ray dataset containing $5908$ X-rays labeled as \texttt{normal}, \texttt{viral pneumonia} (primarily COVID-19), or \texttt{bacterial pneumonia}.
Using $4408$ training pairs over $14$ epochs, we (non-privately) fine-tuned the last layer of a pretrained ResNet-18 from \texttt{torchvision} to predict one of the three diagnoses.
The private conformal calibration procedure saw a further $n=1000$ examples, and we used the remaining $500$ for validation.
The ResNet-18 had a final accuracy of $75\%$ after fine-tuning.
Figure~\ref{fig:experiment4} plots the coverage and set size of this procedure over $1000$ different train/calibration/validation splits of the dataset, and Figure~\ref{fig:three_covid} shows selected examples of these sets.

\section{Discussion}
We introduce a method to produce differentially private prediction sets that contain the true response with a user-specified probability by blending split conformal prediction with differentially private quantile computation. 
The primary challenge we resolve in this work is simultaneously satisfying the coverage property and privacy property, which requires a careful choice of the conformal score threshold to account for the added privacy noise. 
Our corresponding upper bound shows that the coverage does not greatly exceed the nominal level $1-\alpha$, meaning that our procedure is not too conservative. 
Moreover, our upper bound gives insight into the price of privacy in conformal prediction: the upper bound scales as $\tilde O((n \epsilon)^{-1})$ compared to $O(n^{-1})$ for non-private conformal prediction, a mild decrease in efficiency. 
This is confirmed in our experiments, where we show that there is little difference between private and non-private conformal prediction when using the same predictive model. 
We also observe the familiar phenomenon that there is a substantial decrease in accuracy for private model fitting compared to non-private model fitting. 
We conclude that the cost of privacy lies primarily in the model fitting---private calibration has a comparatively minor effect on performance. We also note that any improvement in private model training would immediately translate to smaller prediction sets returned by our method.
In sum, we view private conformal prediction as an appealing method for uncertainty quantification with differentially private models.

\section*{Acknowledgements}
We would like to thank Pierre Humbert and Batiste Le Bars for pointing out a typo in the original statement of Algorithm \ref{alg:private_quantile}.

\bibliographystyle{ieeetr}
\bibliography{conformal, rcps}

\appendix

\section{Auxiliary results}

We start with a result about the error of the private quantile mechanism, stated in Algorithm \ref{alg:private_quantile}. The following is an extension of the the analogous result for the private median due to Feldman and Steinke~\cite{feldman2017generalization}.
\begin{lemma}
\label{lemma:DP_quantile_accuracy}
For any $\delta\in(0,1)$, the differentially private quantile algorithm (Algorithm \ref{alg:private_quantile}) satisfies:
\begin{equation*}
    \PP{ \frac{1}{n}\#\{i:[s_i]\leq \hat s\} \geq q -  \frac{2\max\left\{\frac{1-q}{q},1\right\}\log(m/\delta)}{n\epsilon} }\geq 1-\delta
\end{equation*}
and
\begin{equation*}
    \PP{ \frac{1}{n}\#\{i:[s_i]< \hat s\} \leq q +  \frac{2\max\left\{\frac{q}{1-q},1\right\}\log(m/\delta)}{n\epsilon} }\geq 1-\delta.
\end{equation*}
\end{lemma}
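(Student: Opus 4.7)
The plan is to combine the standard utility guarantee of the exponential mechanism with a direct lower bound on the score function $w_j$, then unwind the definition of $w_j$ to extract bounds on the empirical CDF at $\hat s$. The key observation is that $w_j \geq a_j/q$ and $w_j \geq b_j/(1-q)$, where $a_j = \#\{i : [s_i] < e_j\}$ and $b_j = \#\{i : [s_i] > e_j\}$, so a high-probability upper bound on $w_{\hat s}$ immediately yields upper bounds on $a_{\hat s}$ and $b_{\hat s}$, and hence on the two empirical CDF quantities in the claim.

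First I would apply the usual exponential-mechanism tail bound. Letting $w^\star = \min_j w_j$, since the sampling weight on bin $j$ is $e^{-\epsilon w_j/2}$, the normalizer is at least $e^{-\epsilon w^\star/2}$, so $\PP{\hat s = e_j} \leq e^{-\epsilon(w_j - w^\star)/2}$. A union bound over the at most $m$ indices with $w_j \geq w^\star + t$ yields
$$\PP{w_{\hat s} \geq w^\star + t} \;\leq\; m\, e^{-\epsilon t/2},$$
and setting the right-hand side equal to $\delta$ gives $w_{\hat s} \leq w^\star + (2/\epsilon)\log(m/\delta)$ with probability at least $1-\delta$.

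Next I would show that $w^\star \leq n$. Because all discretized scores lie in $\{e_1,\dots,e_m\}$, one has the identity $a_{j+1} + b_j = n$ for every $j$. Choosing $j^\star$ to be the largest index with $a_{j^\star} \leq nq$ forces $a_{j^\star+1} > nq$, which gives $b_{j^\star} < n(1-q)$. Hence $w_{j^\star} = \max\{a_{j^\star}/q, b_{j^\star}/(1-q)\} \leq n$. (The edge case $a_j \leq nq$ for all $j$ is handled by $b_m = 0$, giving $w_m \leq n$ directly.)

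To finish, I would unwind the definition of $w_j$. Combining the two previous steps, $w_{\hat s} \leq n + (2/\epsilon)\log(m/\delta)$ with probability at least $1-\delta$. Writing $\hat s = e_J$, the inequality $w_J \geq b_J/(1-q)$ gives $b_J \leq (1-q)n + (1-q)(2/\epsilon)\log(m/\delta)$; dividing $\#\{i : [s_i] \leq \hat s\} = n - b_J$ by $n$ and using $1-q \leq \max\{(1-q)/q, 1\}$ yields the first inequality. Symmetrically, $w_J \geq a_J/q$ gives $a_J \leq qn + q(2/\epsilon)\log(m/\delta)$, and $q \leq \max\{q/(1-q), 1\}$ yields the second. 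The main subtlety is the $w^\star \leq n$ step: it relies on the discrete identity $a_{j+1} + b_j = n$ and the careful choice of $j^\star$, and this is where one must be slightly careful; everything else is the textbook exponential-mechanism analysis plus bookkeeping.
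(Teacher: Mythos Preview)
Your proposal is correct and follows essentially the same route as the paper: apply the exponential-mechanism utility guarantee to bound $w_{\hat s}$ near $w^\star$, show $w^\star \le n$ by exhibiting an index where both $a_j/q$ and $b_j/(1-q)$ are at most $n$, and then unwind the definition of $w_j$ to extract the two empirical-CDF bounds. The only cosmetic difference is that you rederive the utility bound from scratch (obtaining $w_{\hat s}\le w^\star + (2/\epsilon)\log(m/\delta)$ directly, hence a slightly tighter intermediate constant that you then relax via $1-q \le \max\{(1-q)/q,1\}$), whereas the paper cites the standard guarantee with the sensitivity factor $\max\{1/q,1/(1-q)\}$ already baked in; after multiplying by $(1-q)$ or $q$ both routes land on the stated lemma.
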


\begin{proof}
By the standard utility guarantee for the exponential mechanism \cite{mcsherry2007mechanism} (e.g., Corollary 3.12 in \cite{dwork2014algorithmic}), we have
\begin{equation}
\label{eq:exp_mech_utility}
\PP{\max\left\{\frac{\#\{j:[s_j] < \hat s\}}{q},\frac{\#\{j:[s_j] > \hat s\}}{1-q}\right\} < \min_i w_i + \frac{2\max\{1/q,1/(1-q)\}\log(m/\delta)}{\epsilon}} \geq 1-\delta.
\end{equation}
First we argue that $\min_i w_i \leq n$. Let $s^* = \min\{s\in\{e_0,\dots,e_m\}: \#\{i: [s_i] \leq s\}>qn\}$. Then, $\#\{i: [s_i] > s^*\} < (1-q)n$ trivially. Furthermore, $\#\{i: [s_i] < s^*\} \leq q n$ by the definition of $s^*$, since $s^*$ is the \emph{first} point at which the cumulative fraction of scores less than or equal to $s^*$ exceeds $q$. Since we have identified a bin where $\max\left\{\frac{\#\{j:[s_j] <  s^*\}}{q},\frac{\#\{j:[s_j] > s^*\}}{1-q}\right\} \leq n$, we can conclude that $\min_i w_i \leq n$.

Going back to equation~\eqref{eq:exp_mech_utility}, we have that with probability at least $1-\delta$,
\begin{align*}
\frac{1}{n}\#\{i:s_i\leq \hat s\} &= 1 - \frac{1}{n}\#\{i: s_i > s\}\\
&\geq 1 - (1-q) \frac{\min_i w_i}{n} - \frac{2\max\{(1-q)/q,1\}\log(m/\delta)}{n\epsilon}\\
&\geq q - \frac{2\max\{(1-q)/q,1\}\log(m/\delta)}{n\epsilon}.
\end{align*}
Similarly,
\begin{align*}
\frac{1}{n}\#\{i:s_i< \hat s\}
&\leq q\frac{\min_i w_i}{n} + \frac{2\max\{1,q/(1-q)\}\log(m/\delta)}{n\epsilon}\\
&\leq q + \frac{2\max\{1,q/(1-q)\}\log(m/\delta)}{n\epsilon}.
\end{align*}
\end{proof}

Next, we package some classical facts about the distribution of order statistics in a form helpful for analyzing conformal prediction.
\begin{lemma}
\label{lemma:beta_lemma}
Let $F$ be the CDF of a distribution supported on a finite set $\{a_1,\dots,a_m\}$. Let $Z_1,\dots,Z_n\stackrel{i.i.d.}{\sim}~F$, and let $\hat F$ denote the empirical CDF corresponding to $Z_1,\dots,Z_n$. Denote also $p_{\max}^m = \max_{1\leq i\leq m}\PP{Z_1 = a_i}$. Then, 
$$Z_{\textsc{Beta}} + p_{\max}^m \succeq F(\hat{F}^{-1}(q)) \succeq Z_{\textsc{Beta}},$$
where $Z_{\textsc{Beta}}$ follows the beta distribution $\textsc{Beta}(\lceil nq\rceil, n - \lceil nq\rceil + 1)$ and $\succeq$ denotes first-order stochastic dominance.
\end{lemma}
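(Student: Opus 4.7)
The plan is to reduce the problem to properties of uniform order statistics via the standard quantile-coupling trick, and then exploit the fact that the overshoot $F(F^{-1}(u))-u$ is bounded by the size of the largest atom of $F$.

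First, I would set up the coupling. Let $U_1,\dots,U_n \stackrel{i.i.d.}{\sim} \mathrm{Uniform}(0,1)$ and define $Z_i = F^{-1}(U_i)$, where $F^{-1}(u) = \inf\{x : F(x)\ge u\}$ is the left-continuous quantile function. Then $Z_i \stackrel{d}{=}$ the $Z_i$'s in the lemma, and since $F^{-1}$ is non-decreasing, the order statistics couple as $Z_{(k)} = F^{-1}(U_{(k)})$. A short argument on the empirical CDF shows $\hat F^{-1}(q) = Z_{(\lceil nq\rceil)}$, hence
$$F(\hat F^{-1}(q)) \;=\; F\!\bigl(F^{-1}(U_{(\lceil nq\rceil)})\bigr).$$

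Next, I would establish the two deterministic sandwich inequalities. The lower bound $F(F^{-1}(u)) \ge u$ is immediate from the definition of $F^{-1}$ (infimum of a closed set where $F$ is at least $u$). For the upper bound, note that since $F$ is supported on $\{a_1,\dots,a_m\}$ with $\PP{Z_1=a_i}\le p_{\max}^m$, the function $F^{-1}$ is piecewise constant and $u\mapsto F(F^{-1}(u))$ has jumps of size exactly the atom masses: for $u\in(F(a_{i-1}),F(a_i)]$ we have $F^{-1}(u)=a_i$ and thus $F(F^{-1}(u)) - u = F(a_i) - u \le F(a_i)-F(a_{i-1}) \le p_{\max}^m$. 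Combining, almost surely
$$U_{(\lceil nq\rceil)} \;\le\; F(\hat F^{-1}(q)) \;\le\; U_{(\lceil nq\rceil)} + p_{\max}^m.$$

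Finally, I would invoke the classical fact that $U_{(\lceil nq\rceil)} \sim \mathrm{Beta}(\lceil nq\rceil, n-\lceil nq\rceil+1)$, which follows from the standard derivation of the joint density of uniform order statistics. Since almost-sure inequalities imply first-order stochastic dominance, both required inequalities follow at once from the sandwich above. The main technical point requiring care is the upper bound: one has to verify that the overshoot $F(F^{-1}(U_{(\lceil nq\rceil)})) - U_{(\lceil nq\rceil)}$ is bounded by $p_{\max}^m$ almost surely regardless of which atom $U_{(\lceil nq\rceil)}$ lands near, but this is handled uniformly by the jump-size argument above. No step should involve more than a line of computation once the coupling is fixed.
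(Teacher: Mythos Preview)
Your proposal is correct and follows essentially the same approach as the paper: identify $\hat F^{-1}(q)$ with the $\lceil nq\rceil$-th order statistic, reduce to uniform order statistics, and control the overshoot by the maximal atom mass $p_{\max}^m$. The only cosmetic difference is that you make the coupling $Z_i=F^{-1}(U_i)$ explicit and obtain an almost-sure sandwich $U_{(\lceil nq\rceil)}\le F(\hat F^{-1}(q))\le U_{(\lceil nq\rceil)}+p_{\max}^m$, whereas the paper argues via the equivalent CDF bounds $u-p_{\max}^m\le \PP{F(Z_1)\le u}\le u$ and then lifts stochastic dominance from the marginals to the order statistics.
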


\begin{proof}
Since we take $\hat F^{-1}(q) = \inf\{z: \hat {F}(z) \ge q\}$ by definition, then that implies $\hat F^{-1}(q) = Z_{(\lceil nq\rceil)}$, where $Z_{(i)}$ denotes the $i$-th non-decreasing order statistic of $Z_1,\dots,Z_n$. By monotonicity of $F$, we further have that $F(Z_{(\lceil nq\rceil)})$ is identical to the $\lceil nq\rceil$-th non-decreasing order statistic of $F(Z_1),\dots,F(Z_n)$. By a standard argument, the samples $F(Z_1),\dots,F(Z_n)$ are super-uniform, i.e. $\PP{F(Z_1)\leq u}\leq u$ for all $u\in[0,1]$. In other words, they are stochastically larger than a uniform distribution on $[0,1]$, and thus their $\lceil nq\rceil$-th order statistic is stochastically lower bounded by the $\lceil nq\rceil$-th order statistic of a uniform distribution, which follows the $\textsc{Beta}(\lceil n\alpha\rceil, n - \lceil n\alpha\rceil + 1)$ distribution. This completes the proof of the lower bound. For the upper bound, we use the fact that $\PP{F(Z_1) \leq u} \geq u - p_{\max}^m$, and so $F(Z_i)$ are stochastically dominated by $U_i + p_{\max}^m$, where $\{U_i\}_{i=1}^n$ are i.i.d. uniform on $[0,1]$. Their $\lceil nq\rceil$-th order statistic is distributed as $Z_{\textsc{Beta}} + p_{\max}^m$, which completes the proof.
\end{proof}


\section{Proofs}

\subsection{Proof of Theorem \ref{thm:coverage}}

First we introduce some notation. By $F$ we will denote the discretized CDF of the scores; in particular, for any $i\in\{1,\dots,n\}$,
$$F(s) = \PP{ [s_i] \leq s}.$$
Here, by $[s_i]$ we denote a \emph{discretized} version of $s_i$ where we set $[s_i] = e_j$ if $s_i \in I_j$. We also let $\hat F$ denote the empirical distribution of the discretized scores:
$$\hat F(s) =\frac{1}{n}\sum_{i=1}^n \one\{[s_i]\leq s\}.$$

By convention, we let $F^{-1}(\delta)$ denote the left-continuous inverse of $F$, i.e. $F^{-1}(\delta) := \inf \{s: F(s)\geq \delta\}$, and we similarly define $\hat F^{-1}(\delta)$.

We can write
\begin{equation*}
    \PP{Y \in \C(X)} = \PP{S(X,Y) \leq \hat s} = \E\left[F(\hat s)\right].
\end{equation*}

Denote the event $E=\left\{ \frac{1}{n}\#\{i:[s_i] \leq \hat s\} \geq \tilde q - \frac{2}{\epsilon n}\log(m/(\gamma\alpha))\right\}$, and note that by Lemma \ref{lemma:DP_quantile_accuracy} and the fact that $\tilde q \geq 0.5$, $\PP{E}\geq 1-\gamma\alpha$. By splitting up the analysis depending on $E$, we obtain the following:
\begin{align*}
    \E\left[F(\hat s)\right] &= \E\left[F(\hat s)\one\{E^c\}\right] + \E\left[F(\hat s)\one\{E\}\right]\\
    &\geq \gamma\alpha\cdot 0  + \E\left[F(\hat s)\one\{E\}\right]\\
    &\geq (1-\gamma\alpha) \E\left[F\left(\hat{F}^{-1}\left(\tilde q - \frac{2}{\epsilon n}\log\left(m/(\gamma\alpha)\right)\right)\right) \right],
\end{align*}
where the final inequality follows by the definition of $E$. Thus, it suffices to show that 
\begin{equation}
\label{eq:coverage_lb}
    \E\left[F\left(\hat{F}^{-1}\left(\tilde q - \frac{2}{\epsilon n}\log\left(m/(\gamma\alpha)\right) \right)\right) \right] \ge \frac{1-\alpha}{1-\gamma\alpha}.
\end{equation}
Let $j^* = \left\lceil n \left(\tilde q - \frac{2}{\epsilon n}\log\left(m/(\gamma\alpha)\right) \right)\right\rceil$. Then, by Lemma \ref{lemma:beta_lemma},
\begin{equation*}
    F\left(\hat{F}^{-1}\left(\tilde q - \frac{2}{\epsilon n}\log\left(m/(\gamma\alpha)\right)\right)\right) \succeq \textsc{Beta}(j^*, n - j^* + 1),
\end{equation*}
so
\begin{equation*}
    \E\left[F\left(\hat{F}^{-1}\left(\tilde q - \frac{2}{\epsilon n}\log\left(m/(\gamma\alpha)\right)\right)\right)\right] \ge \frac{j^*}{ n+1} = \frac{\lceil n (\tilde q - \frac{2}{\epsilon n}\log\left(m/(\gamma\alpha)\right) )\rceil}{n+1}.
\end{equation*}
By the definition of $\tilde q$, we see that
\begin{equation*}
    \frac{\lceil n (\tilde q - \frac{2}{\epsilon n}\log\left(m/(\gamma\alpha)\right) )\rceil}{n+1} \ge \frac{1-\alpha}{1-\gamma\alpha},
\end{equation*}
holds, which implies equation~\eqref{eq:coverage_lb} and thus completes the proof.

\subsection{Proof of Theorem \ref{thm:coverage_upper}}

We adopt the definitions of $F$, $\hat F$ from Theorem \ref{thm:coverage}, and define $E$ as the
event
\begin{equation*}
\left\{ \frac{1}{n}\#\{i:[s_i]< \hat s\} \leq \tilde q +  \frac{2\max\{\frac{\tilde q}{1-\tilde q},1\}\log(m/\gamma\alpha)}{n\epsilon} \right\},
\end{equation*}
which by Lemma~\ref{lemma:DP_quantile_accuracy} 
has probability at least $1-\gamma\alpha$.
By a similar reasoning as in Theorem \ref{thm:coverage}, we obtain the following:
\begin{align}
\label{eq:upper_bound1}
    \E\left[F(\hat s)\right] &= \E\left[F(\hat s)\one\{E^c\}\right] + \E\left[F(\hat s)\one\{E\}\right] \nonumber\\
    &\leq \gamma\alpha\cdot 1  + \E\left[F(\hat s)\one\{E\}\right] \nonumber\\
    &\leq \gamma\alpha + (1-\gamma\alpha) \left(\E\left[F\left(\hat{F}^{-1}\left(\tilde q + \frac{2\max\{\frac{\tilde q}{1-\tilde q},1\}}{\epsilon n}\log\left(m/(\gamma\alpha)\right) \right)\right) \right] + p^m_{\max}\right),
\end{align}
where the final inequality follows by the definition of $E$.

Let $j^* = \lceil n \left(\tilde q + \frac{2\max\{\frac{\tilde q}{1-\tilde q},1\}}{\epsilon n}\log\left(m/(\gamma\alpha)\right) \right)\rceil$. By Lemma \ref{lemma:beta_lemma}, we have
\begin{equation*}
    F\left(\hat{F}^{-1}\left(\tilde q + \frac{2\max\{\frac{\tilde q}{1-\tilde q},1\}}{\epsilon n}\log\left(m/(\gamma\alpha)\right) \right)\right) \preceq \textsc{Beta}(j^*, n - j^* + 1) + p_{\max}^m,
\end{equation*}
so
\begin{equation}
\label{eq:upper_bound2}
    \E\left[F\left(\hat{F}^{-1}\left(\tilde q + \frac{2\max\{\frac{\tilde q}{1-\tilde q},1\}}{\epsilon n}\log\left(m/(\gamma\alpha)\right)\right)\right) \right] \le \frac{j^*}{ n+1} + p_{\max}^m = \frac{\lceil n \left(\tilde q + \frac{2\max\{\frac{\tilde q}{1-\tilde q},1\}}{\epsilon n}\log\left(m/(\gamma\alpha)\right) \right)\rceil}{n+1} + p_{\max}^m.
\end{equation}
By the definition of $\tilde q$, we see that
\begin{align}
\label{eq:upper_bound3}
    \frac{\lceil n \left(\tilde q + \frac{2\max\{\frac{\tilde q}{1-\tilde q},1\}}{\epsilon n}\log\left(m/(\gamma\alpha)\right) \right)\rceil}{n+1} &\le \frac{\frac{1-\alpha}{1-\gamma\alpha}(n+1) + \frac{2(1 +\max\{\frac{\tilde q}{1-\tilde q},1\})}{\epsilon}\log\left(m/(\gamma\alpha)\right) }{n+1} \nonumber\\
    &= \frac{1-\alpha}{1-\gamma\alpha} + \frac{2(1 +\max\{\frac{\tilde q}{1-\tilde q},1\})\log(m/(\gamma\alpha))}{(n+1) \epsilon }.
\end{align}
Putting together equations \eqref{eq:upper_bound1}, \eqref{eq:upper_bound2}, and \eqref{eq:upper_bound3} completes the proof.

\section{Experimental details}
\label{app:exp_details}

\paragraph{Choosing $m^*$ and $\gamma$.}
Algorithm~\ref{alg:mstar-gamma} gives automatic choices of the optimal number of uniformly spaced bins, $m^*$, and the tuning parameter $\gamma$ that work well for approximately uniformly distributed scores.
In a moment, we will show how to find the optimal value $\gamma^*$ for a fixed value of $m$. 
Once $\gamma^*$ gets chosen, we will simulate uniformly distributed scores to choose the value $m^*$ that results in the best quantile for specific, pre-determined values of $\alpha$, $\epsilon$, and $n$. 
In practice, $m^*$ can be chosen from a relatively coarse grid of, say, 50 values logarithmically spaced from $10^2$ to $10^6$.

We start choosing the optimal value $\gamma^*$ by solving for the zeros of the derivative $\frac{\delta\tilde{q}}{\delta\gamma}$, leading to the quadratic expression,
\begin{equation}
    \label{eq:optimal_gamma_derivative}
    \frac{\delta\tilde{q}}{\delta \gamma} = 0 \Longleftrightarrow \alpha^2\gamma^2 -\left( \frac{\alpha(1-\alpha)\epsilon(n+1)}{2} +2\alpha \right)\gamma + 1 = 0.
\end{equation}
Letting $\Gamma$ be the roots of~\eqref{eq:optimal_gamma_derivative}, we can then choose the optimal value $\gamma^*$ as
\begin{equation}
\label{eq:optimal_gamma}
    \gamma^* = \argmin_{\gamma \in \Gamma \cap (0,1) \cup \{1e-12\}} \left[ \frac{(n+1)(1-\alpha)}{n(1-\gamma\alpha)} + \frac{2}{\epsilon n}\log\left(\frac{m}{\gamma\alpha}\right)\right],
\end{equation}
where the number 1e-12 takes care of the case that both roots lie outside the interval $(0,1)$.

\begin{algorithm}[H]
\SetAlgoLined
\SetKwInOut{Input}{input}
\Input{number of calibration points $n$, privacy level $\epsilon>0$, confidence level $\alpha\in(0,1)$}
\hspace{0.08cm} Simulate $n$ uniform conformity scores $s_i \sim {\rm Unif}(0,1), i=1,...,n$\newline
Choose $m^*$ to be the value of $m$ minimizing the output of Algorithm 3 on the $s_i$ with the optimal $\gamma^*$ chosen by~\eqref{eq:optimal_gamma}. \newline
\textbf{output:} $m^*$, $\gamma^*$
\caption{Get optimal number of bins and $\gamma$}
\label{alg:mstar-gamma}
\end{algorithm}

\paragraph{Private training procedure.}
We used the \texttt{Opacus} library with the default parameter choices included in the \hyperlink{https://github.com/pytorch/opacus/blob/master/examples/cifar10.py}{CIFAR-10 example code}.
The only difference in the non-private model training is the use of the \texttt{--disable-dp} flag, turning off the added noise but preserving all other settings.

\end{document}